\documentclass[10pt,twocolumn]{article}

\usepackage[margin=0.72in,columnsep=0.24in]{geometry}
\usepackage{microtype}
\usepackage{graphicx}
\usepackage{subcaption}
\usepackage{booktabs}
\usepackage{tabularx}
\usepackage{amsmath}
\usepackage{amssymb}
\usepackage{mathtools}
\usepackage{amsthm}
\usepackage[round,authoryear]{natbib}
\usepackage{xcolor}
\usepackage[colorlinks=true,allcolors=blue!55!black]{hyperref}
\usepackage[capitalize,noabbrev]{cleveref}

\theoremstyle{plain}
\newtheorem{theorem}{Theorem}[section]
\newtheorem{proposition}[theorem]{Proposition}

\theoremstyle{definition}

\theoremstyle{remark}

\title{\textbf{A Stackelberg Framework for Resource-Aware LLM Agents:\\
Learning, Repair, and Conditional Guarantees}}
\author{Baoxun Wang\\
Platform and Content Group, Tencent}
\date{Technical Report --- Preprint --- \today}

\begin{document}

\maketitle

\begin{abstract}
Large language model (LLM) agents increasingly operate as multi-turn systems that must allocate context, prompt verbosity, and tool access under finite computational budgets.  Static thresholds are simple, but they are brittle under heterogeneous tasks and evolving session states.  We formulate resource governance as a contextual Stackelberg game: a controller commits to a quality target and a cost incentive, while an executor responds with resource actions over context, prompting, and tool usage.  We learn a conditional response model, optimize a leader policy against that model, and repair the resulting policy using real-API calibration and projection onto an empirically selected action set.  For the restricted game, we establish conditional guarantees for equilibrium existence, follower-response stability, safe-set projection, and transfer from a surrogate environment to the real environment under bounded value error.  The primary real-API experiment comprises 300 evaluated turns.  Relative to a conservative baseline, the selected repaired controller reduces mean token cost by 17.4\% (Welch $p=0.022$), while the measured quality difference is not statistically significant ($p=0.44$).  The theoretical results are conditional and the experiments do not estimate their regret or transfer constants; consequently, the evidence establishes a promising repaired operating point, not a certified real-system equilibrium.
\end{abstract}
\noindent\textbf{Keywords:} LLM agents; Stackelberg games; resource allocation;
cost-quality control; passive shadow evaluation.

\section{Introduction}

LLM agents are no longer isolated prompt-response systems.  Recent agentic systems combine reasoning, acting, tool use, reflection, and long-horizon interaction \citep{brown2020language,yao2023react,schick2023toolformer,shinn2023reflexion,wang2023voyager}.  In realistic multi-turn environments, an agent repeatedly decides how much conversation history to retain, whether to summarize or compress prior context, how detailed the next prompt should be, whether tools should be invoked, and how aggressively the current step should spend a limited budget.  These decisions jointly determine cost, latency, robustness, and answer quality.

Many agent systems implement such decisions with static control rules: fixed token-reserve thresholds, fixed compaction policies, fixed prompt templates, and fixed tool-access configurations.  Static rules are easy to implement, but they are brittle under heterogeneous workloads, especially when context placement and API cost-quality trade-offs matter \citep{liu2024lost,chen2023frugalgpt}.  A short factual query and a multi-file coding task should not receive the same context policy, prompt verbosity, or tool budget.  Conversely, blindly enabling all resources for every task can waste substantial computation.

This report studies the following question: can resource governance in LLM agents be learned as a leader-follower mechanism rather than hand-written as static execution rules?  We argue that this problem is naturally captured by a Stackelberg formulation, where a leader commits first and a follower subsequently responds \citep{stackelberg1934marktform,fudenberg1991game,conitzer2006computing}.  A global controller, such as a gateway or policy layer, observes the session state and commits to a quality target and a budget incentive.  A task executor observes these signals and responds with concrete resource actions.

We make four contributions.  First, we formulate multi-turn LLM-agent resource governance as a contextual leader-follower game with explicit commitment, budget state, and temporal consistency.  Second, we give conditional results for restricted-equilibrium existence, follower-response stability, repair feasibility, and surrogate-to-real regret transfer.  Third, we present a learning-and-repair pipeline over context management, prompt verbosity, and tool budget.  Fourth, we provide a structured evaluation hierarchy: a primary real-API cost-quality experiment, followed by calibration diagnostics, offline safety analysis, and non-interfering shadow validation.

\section{Theoretical Foundation}

\subsection{Stage-wise Contextual Stackelberg Game}

We use a stage-wise contextual Stackelberg abstraction for each agent turn, and later lift it to a finite-horizon policy objective \citep{stackelberg1934marktform,fudenberg1991game}.  Let $s_t \in \mathcal{S}$ denote the raw observable session features at turn $t$.  To make budget and temporal-commitment terms explicit, define the augmented state
\begin{equation}
    x_t := (s_t, B_t, \ell_{t-1}) \in \mathcal{X},
\end{equation}
where $B_t$ is the remaining budget state and $\ell_{t-1}$ is the previous leader signal.  The leader first proposes a raw signal $\tilde{\ell}_t=(\tilde{q}_t,\tilde{\alpha}_t)\in\widetilde{\mathcal{L}}$, which is projected to the feasible commitment set
\begin{equation}
    \ell_t=(q_t,\alpha_t)
    = \Pi_{\mathcal{L}}(\tilde{\ell}_t),
    \qquad
    \mathcal{L}:=[q_{\min},q_{\max}]\times[0,1].
    \label{eq:leader-projection}
\end{equation}
Here $q_t$ is a quality or completeness target and $\alpha_t$ is a cost-subsidy coefficient.  After observing $(x_t,\ell_t)$, the follower selects a resource action
\begin{equation}
    a_t = (c_t, p_t, u_t) \in \mathcal{A}:=[0,1]^3,
\end{equation}
where $c_t$ controls context retention, $p_t$ controls prompt verbosity, and $u_t$ controls tool budget.  The environment then produces token cost $T_t$, quality signal $Q_t$, and next augmented state $x_{t+1}$.

For a fixed augmented state $x$ and leader signal $\ell$, the follower's stage-wise best-response correspondence is
\begin{equation}
    \mathrm{BR}^{\mathrm{stg}}_F(x,\ell)
    := \arg\max_{a \in \mathcal{A}} \, U_F(x,\ell,a).
    \label{eq:follower-br}
\end{equation}
The raw output is projected before entering the game.  The leader's
stage-wise Stackelberg choice is therefore expressed over feasible signals as
\begin{equation}
\begin{aligned}
    \ell^*(x)
    &\in \arg\max_{\ell \in \mathcal{L}}
    U_L\bigl(x,\ell,a^*(x,\ell)\bigr), \\
    a^*(x,\ell)
    &\in \arg\max_{a\in\mathrm{BR}^{\mathrm{stg}}_F(x,\ell)}
    U_L(x,\ell,a).
\end{aligned}
    \label{eq:leader-br}
\end{equation}
If the best-response set is multi-valued, we either assume the learned follower response is deterministic or use the optimistic tie-breaking convention.  This convention is sufficient for our empirical deployment setting, where a trained follower model returns a concrete response action.

This is the essential distinction from direct resource control: the leader does not set $(c,p,u)$ directly; it sets $(q,\alpha)$ so as to induce a desirable follower response.

\subsection{Why This Matches Agent Resource Governance}

Many agent architectures separate a control plane from task execution, analogous to leader-follower resource-allocation settings studied in Stackelberg security games \citep{kiekintveld2009computing,tambe2011security}.  A policy layer manages memory, budget, and safety, while an executor completes the current task.  The controller's objective is long-horizon cost-quality stability, whereas the executor's local objective is task completion under the perceived cost of resources.  This partial misalignment motivates a commitment model rather than a purely cooperative hierarchy.

The subsidy $\alpha_t$ is the key mechanism.  It transforms physical token cost into the follower's perceived cost,
\begin{equation}
    C_F(x_t,a_t;\alpha_t) = (1-\alpha_t)\,T(x_t,a_t),
    \label{eq:effective-cost}
\end{equation}
so higher $\alpha_t$ makes costly but potentially higher-quality actions more attractive to the follower.  The target $q_t$ complements $\alpha_t$ by preventing purely cost-minimizing behavior: the follower is penalized for failing to satisfy the leader's quality commitment.

\subsection{Notation}

\begin{table}[t]
\caption{Core notation for the contextual Stackelberg formulation.}
\label{tab:notation}
\vskip 0.05in
\begin{center}
\begin{small}
\begin{tabularx}{\columnwidth}{@{}lX@{}}
\toprule
Symbol & Meaning \\
\midrule
$s_t$ & raw observable session features at turn $t$ \\
$x_t=(s_t,B_t,\ell_{t-1})$ & augmented state used by the controller \\
$B_t$ & remaining budget state \\
$\tilde{\ell}_t=(\tilde{q}_t,\tilde{\alpha}_t)$ & raw leader output before projection \\
$\widetilde{\mathcal{L}}$ & raw leader-output domain \\
$\ell_t=(q_t,\alpha_t)$ & projected quality target and subsidy \\
$a_t=(c_t,p_t,u_t)$ & follower context, prompt, and tool action \\
$T(x,a)$ & token cost under state-action pair \\
$Q(x,a)$ & quality or completeness signal \\
$T_0(x)$ & reference baseline token cost \\
$U_L,U_F$ & leader and follower utilities \\
$\pi_L,\pi_F$ & leader and follower policies \\
$\Pi_L,\Pi_F$ & restricted leader and follower policy classes \\
$P,\widehat{P}$ & real and estimated transition laws \\
$V_L^M$ & reduced leader value in environment $M$ \\
$\mathcal R_L^M,\mathcal R_F^M$ & restricted leader and follower regrets \\
\bottomrule
\end{tabularx}
\end{small}
\end{center}
\end{table}

In our implementation, $s_t$ contains task complexity, context length, turn index, average cost per turn, recent quality, cost trend, model condition, and coarse task type, while $B_t$ and $\ell_{t-1}$ are included in the augmented state.  A task-aware variant expands task type into a one-hot representation over six broad task categories.

\subsection{Modeling Scope and Assumptions}

The stage-wise Stackelberg formulation is an abstraction of one resource-allocation decision, not a proof that the executor is a perfectly rational economic agent.  We assume that (i) the controller's commitment signal is observable to the response model, (ii) the response distribution is stationary over the data-collection and optimization interval, and (iii) the projected action is implemented consistently by the runtime adapter.  These assumptions can fail under model updates, prompt-template changes, or tool-interface changes.

The environment transition is written as
\begin{equation}
    (T_t,Q_t,x_{t+1}) \sim P(\cdot\mid x_t,\ell_t,a_t),
    \label{eq:transition}
\end{equation}
where $P$ is unknown and is approximated by simulation, supervised surrogates, or direct API evaluation depending on the experiment.  This distinction is important: optimization under an estimated transition or payoff model yields a candidate operating point for that estimated environment, not a certified equilibrium for the real system.

\section{Payoff Design}

\subsection{Follower Utility}

The follower utility combines quality, effective cost, and target-gap penalty.  We write
\begin{equation}
\begin{aligned}
    U_F(x_t,\ell_t,a_t)
    &= w_Q Q(x_t,a_t)
       - w_C (1-\alpha_t) T(x_t,a_t) \\
    &\quad - w_G [q_t - Q(x_t,a_t)]_+,
\end{aligned}
\label{eq:follower-utility}
\end{equation}
where $[z]_+ := \max(z,0)$ and all weights are non-negative.  The term $[q_t-Q]_+$ ensures that the quality target is not merely advisory: if the leader commits to a high target, low-quality low-cost actions lose utility.  Imitation rewards used during response learning are treated as training surrogates, not as semantic payoffs in the game.  This separation keeps the equilibrium interpretation tied to resource governance rather than to a discriminator artifact.

\subsection{Leader Utility}

The leader is rewarded for saving cost relative to a reference policy while maintaining quality and avoiding degenerate target choices.  Let
\begin{equation}
    S(x_t,a_t) = \operatorname{clip}\left(1 - \frac{T(x_t,a_t)}{T_0(x_t)},\, -\kappa,\, 1\right)
    \label{eq:saving-ratio}
\end{equation}
be the clipped saving ratio relative to baseline cost $T_0(x_t)$.  A representative leader utility is
\begin{equation}
\begin{aligned}
    U_L(x_t,\ell_t,a_t)
    &= \omega_S S(x_t,a_t)
       - \omega_D [\tau_Q - Q(x_t,a_t)]_+ \\
    &\quad - \omega_B G_B(x_t,a_t) \\
    &\quad - \omega_\Delta \|\ell_t - \ell_{t-1}\|_2^2 .
\end{aligned}
\label{eq:leader-utility}
\end{equation}
where
\begin{equation}
    G_B(x_t,a_t)
    :=[\tau_B-B_t]_+[T(x_t,a_t)-\tau_T]_+
    \label{eq:budget-penalty}
\end{equation}
is a continuous soft penalty for expensive actions under low remaining budget.
This form avoids the discontinuity of a hard threshold indicator and makes the
regularity assumptions used below explicit.  A runtime may still use a hard
alert for monitoring, but that alert is not part of the theoretical payoff.
Because \cref{eq:leader-projection} already enforces
$q_t\in[q_{\min},q_{\max}]$, the game payoff does not require a second penalty
on infeasible raw targets.  The smoothness term discourages unstable incentive
changes across turns.

\subsection{Commitment and Temporal Consistency}

A practical Stackelberg controller should not change incentives arbitrarily at every token or micro-step.  We therefore interpret $(q_t,\alpha_t)$ as a short-horizon commitment, optionally held fixed for $K$ follower steps.  For the subsidy, the raw output $\tilde{\alpha}_t$ can be smoothed and projected by
\begin{equation}
    \alpha_t
    = \Pi_{[0,1]\cap[\alpha_{t-1}-\delta_\alpha,\,\alpha_{t-1}+\delta_\alpha]}
    \left(\rho\alpha_{t-1} + (1-\rho)\tilde{\alpha}_t\right).
    \label{eq:alpha-smoothing}
\end{equation}
This makes the leader signal temporally consistent, explicitly enforces the maximum per-turn subsidy change, and makes the signal easier for the follower policy to interpret.

\section{Learning Approximate Stackelberg Operating Points}

\subsection{Empirical Backward Induction}

Exact backward induction would require computing \cref{eq:follower-br} for every leader signal and then solving \cref{eq:leader-br}.  In LLM-agent systems, both the response function and payoff function are unknown, noisy, and expensive to evaluate.  We therefore use an empirical policy-level approximation.  Let $\Pi_L$ and $\Pi_F$ be the restricted policy classes used in training, and let $\widehat{P}$ denote the simulator or calibrated surrogate used for optimization.  Let $\mathfrak{BR}^{\widehat{P}}_F(\pi_L)$ denote a follower response within $\Pi_F$ under $\widehat{P}$.  We approximate
\begin{equation}
    \widehat{\pi}_F \approx \mathfrak{BR}^{\widehat{P}}_F(\pi_L),
    \qquad
    \widehat{\pi}_L \approx
    \arg\max_{\pi_L\in\Pi_L}
    \widehat{J}_L(\pi_L;\widehat{\pi}_F,\widehat{P}).
    \label{eq:empirical-bi}
\end{equation}
This is best understood as learning a candidate operating point within restricted function classes and an estimated environment rather than solving the unrestricted real game.

\subsection{Follower Response Learning}

We learn a conditional follower policy
\begin{equation}
    \pi_F(a_t \mid x_t,q_t,\alpha_t),
    \label{eq:follower-policy}
\end{equation}
which maps the leader's incentive signal and the augmented session state to resource actions.  With a GAIL-style discriminator $D_\psi$, the follower can be trained by the saddle-point objective \citep{ho2016gail,finn2016guided}
\begin{equation}
\begin{aligned}
    \min_{\pi_F}\max_{D_\psi}\;&
    \mathcal{J}_{\mathrm{GAIL}}(\pi_F,D_\psi) \\
    &= \mathbb{E}_{\mathcal{D}_E}
       \!\left[\log D_\psi(x,\ell,a)\right] \\
    &\quad + \mathbb{E}_{\pi_F}
       \!\left[\log(1-D_\psi(x,\ell,a))\right] \\
    &\quad - \lambda_H \mathcal{H}(\pi_F),
\end{aligned}
\label{eq:gail-objective}
\end{equation}
where $\mathcal{D}_E$ denotes expert or heuristic response data and $\mathcal{H}$ is an entropy regularizer.  The important modeling point is that $\pi_F$ and $D_\psi$ are conditioned on $(q,\alpha)$; the model learns a response surface, not a single average execution rule.

\subsection{Leader Optimization}

Given $\widehat{\pi}_F$, the leader maximizes the expected discounted utility under the optimization environment $\widehat{P}$,
\begin{equation}
    \widehat{J}_L(\pi_L;\widehat{\pi}_F,\widehat{P})
    = \mathbb{E}_{\tau\sim(\pi_L,\widehat{\pi}_F,\widehat{P})}
    \left[\sum_{t=0}^{H-1}\gamma^t U_L(x_t,\ell_t,a_t)\right],
    \label{eq:leader-objective}
\end{equation}
where $\tilde{\ell}_t\sim\pi_L(\cdot\mid x_t)$, $\ell_t=\Pi_{\mathcal{L}}(\tilde{\ell}_t)$, and $a_t\sim\widehat{\pi}_F(\cdot\mid x_t,q_t,\alpha_t)$.  PPO/HRL-style updates are used to optimize this objective in practice \citep{sutton1999between,schulman2017ppo}.

\subsection{Approximate Equilibrium Interpretation}

For an environment $M\in\{P,\widehat P\}$ and $i\in\{L,F\}$, define
\begin{equation}
 J_i^M(\pi_L,\pi_F)
 :=
 \mathbb E_{\tau\sim(\pi_L,\pi_F,M)}
 \left[\sum_{t=0}^{H-1}\gamma^t U_i(x_t,\ell_t,a_t)\right].
 \label{eq:policy-value}
\end{equation}
We assume below that the restricted classes are chosen so that the relevant
maxima are attained; finite policy classes are sufficient.  Define the
restricted follower best-response correspondence
\begin{equation}
    \mathfrak{BR}^{M}_F(\pi_L)
    :=\arg\max_{\pi_F\in\Pi_F}J_F^M(\pi_L,\pi_F).
\end{equation}
Let $b_M(\pi_L)\in\mathfrak{BR}^{M}_F(\pi_L)$ be an optimistic selector, chosen to maximize leader utility among follower best responses, and define the reduced Stackelberg value
\begin{equation}
    V_L^M(\pi_L)
    :=J_L^M\bigl(\pi_L,b_M(\pi_L)\bigr).
    \label{eq:reduced-leader-value}
\end{equation}
Unlike a deviation that artificially freezes the follower, $V_L^M$ allows the follower to respond again after every leader deviation.

For a candidate pair $(\pi_L,\pi_F)$, define restricted follower and leader regrets by
\begin{align}
    \mathcal R_F^M(\pi_L,\pi_F)
    &:=
    \sup_{\pi'_F\in\Pi_F}J_F^M(\pi_L,\pi'_F)
    -J_F^M(\pi_L,\pi_F), \label{eq:follower-regret}\\
    \mathcal R_L^M(\pi_L)
    &:=
    \sup_{\pi'_L\in\Pi_L}V_L^M(\pi'_L)
    -V_L^M(\pi_L). \label{eq:leader-regret}
\end{align}
The pair is an $(\epsilon_L,\epsilon_F)$ restricted approximate Stackelberg operating point in $M$ when
\begin{equation}
    \mathcal R_L^M(\pi_L)\leq\epsilon_L,
    \qquad
    \mathcal R_F^M(\pi_L,\pi_F)\leq\epsilon_F.
    \label{eq:approx-operating-point}
\end{equation}
The leader regret is always non-negative because it compares reduced
Stackelberg values, while the follower regret measures whether the candidate
response is close to a restricted best response.  This definition does not
assert that GAIL produces a best response.  Any discrepancy between the
learned follower and the restricted response oracle appears explicitly as
follower regret $\epsilon_F$.  The actual leader payoff under an approximate
follower can differ from $V_L^M(\pi_L)$; controlling that implementation gap
requires an additional response-approximation argument and is not claimed here.

\section{Conditional Theoretical Guarantees}

The results below establish properties of the restricted mathematical model.  They are conditional statements, not empirical certificates for the learned neural policies.

\subsection{Existence of a Stage-Wise Equilibrium}

\begin{theorem}[Restricted equilibrium existence]
\label{thm:existence}
Fix a state $x$.  Suppose $\mathcal L$ and $\mathcal A$ are nonempty compact sets and $U_L(x,\ell,a)$ and $U_F(x,\ell,a)$ are continuous in $(\ell,a)$.  Under optimistic tie breaking, the stage-wise game admits a Stackelberg solution $(\ell^\star,a^\star)$.
\end{theorem}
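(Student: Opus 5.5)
The plan is the standard Berge maximum-theorem argument, applied twice: first to the follower problem, then to the leader's optimistic value function.

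\emph{Follower best responses.} Fix $x$ and write $f(\ell,a):=U_F(x,\ell,a)$ and $g(\ell,a):=U_L(x,\ell,a)$, both continuous on the compact set $\mathcal L\times\mathcal A$. The constraint correspondence $\ell\mapsto\mathcal A$ is constant, nonempty and compact-valued, so it is continuous. Berge's maximum theorem then shows that $\mathrm{BR}^{\mathrm{stg}}_F(x,\cdot)$ in \cref{eq:follower-br} is nonempty, compact-valued and upper hemicontinuous. Equivalently, its graph
\begin{equation*}
\mathcal G:=\{(\ell,a): a\in\mathrm{BR}^{\mathrm{stg}}_F(x,\ell)\}
\end{equation*}
is closed in $\mathcal L\times\mathcal A$. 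Closedness can also be checked directly. Suppose $(\ell_n,a_n)\to(\ell,a)$ with $f(\ell_n,a_n)\ge f(\ell_n,a')$ for every $a'$. Passing to the limit gives $f(\ell,a)\ge f(\ell,a')$ for every $a'$.

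\emph{Optimistic selection.} For each $\ell$ the set $\mathrm{BR}^{\mathrm{stg}}_F(x,\ell)$ is nonempty and compact, and $g(\ell,\cdot)$ is continuous. Hence the inner maximization in \cref{eq:leader-br} is attained, which gives a well-defined optimistic value
\begin{equation*}
\phi(\ell):=\max_{a\in\mathrm{BR}^{\mathrm{stg}}_F(x,\ell)} g(\ell,a).
\end{equation*}

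\emph{Leader problem (main obstacle).} The correspondence $\mathrm{BR}^{\mathrm{stg}}_F$ is only upper hemicontinuous, not lower hemicontinuous. So $\phi$ need not be continuous, and Berge cannot be reapplied. Continuity is not needed, however, because the maximum of $\phi$ can be attained as a maximum over the whole graph. Note that
\begin{equation*}
\sup_{\ell\in\mathcal L}\phi(\ell)=\max_{(\ell,a)\in\mathcal G} g(\ell,a).
\end{equation*}
The right-hand side is attained because $\mathcal G$ is a closed subset of the compact set $\mathcal L\times\mathcal A$, hence compact, and it is nonempty. Let $(\ell^\star,a^\star)$ be a maximizer of $g$ over $\mathcal G$. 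Then $a^\star\in\mathrm{BR}^{\mathrm{stg}}_F(x,\ell^\star)$. Moreover $a^\star$ maximizes $g(\ell^\star,\cdot)$ over that best-response set, since any better $a$ would give a better point of $\mathcal G$. Therefore $\phi(\ell^\star)=g(\ell^\star,a^\star)\ge\phi(\ell)$ for every $\ell\in\mathcal L$.

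This establishes that $\ell^\star$ solves the outer problem in \cref{eq:leader-br} and that $a^\star=a^*(x,\ell^\star)$, so $(\ell^\star,a^\star)$ is a Stackelberg solution. An equivalent route is to show that $\phi$ is upper semicontinuous, which follows from closedness of $\mathcal G$, and then apply the Weierstrass theorem for upper semicontinuous functions on the compact set $\mathcal L$.

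The only delicate step is this one: avoiding any claim of continuity of $\phi$. Optimistic tie-breaking is exactly what makes it work. Under pessimistic tie-breaking, the relevant value is a min over the best-response set, which is lower semicontinuous, and existence can fail.
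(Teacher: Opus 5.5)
Your proposal is correct and follows essentially the same route as the paper. Both use the maximum theorem to show that the follower best-response graph $\mathcal G_x$ is nonempty and compact, and then maximize the continuous $U_L$ over that graph. You additionally make explicit two points the paper's proof leaves implicit: why a maximizer over $\mathcal G_x$ solves both the inner optimistic selection and the outer leader problem, and why continuity of the optimistic value function is never needed.
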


\begin{proof}
Continuity of $U_F$ and compactness of $\mathcal A$ imply that
$\mathrm{BR}^{\mathrm{stg}}_F(x,\ell)$ is nonempty and compact for every
$\ell$.  Its graph is compact by the maximum theorem.  Therefore
\[
    \mathcal G_x
    :=\{(\ell,a):\ell\in\mathcal L,\,
    a\in\mathrm{BR}^{\mathrm{stg}}_F(x,\ell)\}
\]
is nonempty and compact.  The continuous function $U_L$ attains a maximum
on $\mathcal G_x$.  Any maximizer $(\ell^\star,a^\star)$ is an optimistic
Stackelberg solution.
\end{proof}

The box-valued action sets in \cref{eq:leader-projection} satisfy compactness.
For the payoff specification above, continuity also holds when $T(x,\cdot)$
and $Q(x,\cdot)$ are continuous and $T_0(x)>0$.  The result guarantees
existence for the stated restricted game; it does not imply that the learning
algorithm finds the maximizer.

\subsection{Stability of the Follower Response}

\begin{proposition}[Lipschitz best response]
\label{prop:response-stability}
Let $\mathcal A$ be nonempty, compact, and convex.  Suppose
$a\mapsto U_F(x,\ell,a)$ is differentiable and $\mu$-strongly concave for
every $\ell$, and
\[
 \|\nabla_a U_F(x,\ell,a)-\nabla_a U_F(x,\ell',a)\|
 \leq L_{a\ell}\|\ell-\ell'\|.
\]
Then the follower best response is unique and satisfies
\begin{equation}
    \|a^\star(x,\ell)-a^\star(x,\ell')\|
    \leq \frac{L_{a\ell}}{\mu}\|\ell-\ell'\|.
    \label{eq:br-lipschitz}
\end{equation}
\end{proposition}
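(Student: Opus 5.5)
The plan is the standard variational-inequality argument for strongly monotone problems over a convex set.

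\emph{Existence and uniqueness.} Fix $x$ and $\ell$. The map $a\mapsto U_F(x,\ell,a)$ is differentiable, hence continuous, and $\mathcal A$ is nonempty and compact, so a maximizer exists, exactly as in the first step of the proof of \cref{thm:existence}. Strong concavity makes it unique: two distinct maximizers $a_1\neq a_2$ would give $U_F$ at the midpoint (which lies in $\mathcal A$ by convexity) strictly larger than the common maximal value, a contradiction. Write $a^\star(x,\ell)$ for this maximizer.

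\emph{Optimality conditions.} Since $\mathcal A$ is convex and $U_F$ is differentiable and concave in $a$, the maximizer is characterized by the first-order variational inequality
\[
 \langle \nabla_a U_F(x,\ell,a^\star(x,\ell)),\, b-a^\star(x,\ell)\rangle\le 0
 \quad\text{for all } b\in\mathcal A .
\]
Write $a=a^\star(x,\ell)$ and $a'=a^\star(x,\ell')$. Apply the inequality at $\ell$ with $b=a'$, and at $\ell'$ with $b=a$. Adding the two gives
\[
 \langle \nabla_a U_F(x,\ell,a)-\nabla_a U_F(x,\ell',a'),\, a'-a\rangle\le 0 .
\]

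\emph{Splitting the gradient difference.} Insert and subtract $\nabla_a U_F(x,\ell,a')$. The first piece, $\langle \nabla_a U_F(x,\ell,a)-\nabla_a U_F(x,\ell,a'),\,a'-a\rangle$, is at least $\mu\|a-a'\|^2$. This follows by adding the two strong-concavity inequalities $U_F(b)\le U_F(c)+\langle\nabla U_F(c),b-c\rangle-\tfrac{\mu}{2}\|b-c\|^2$ with the roles of $a$ and $a'$ swapped, which yields strong monotonicity of $-\nabla_a U_F$. The second piece is $\langle \nabla_a U_F(x,\ell,a')-\nabla_a U_F(x,\ell',a'),\,a'-a\rangle$. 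By Cauchy--Schwarz and the Lipschitz hypothesis in $\ell$, it is at least $-L_{a\ell}\|\ell-\ell'\|\,\|a-a'\|$. Combining with the summed inequality,
\[
 \mu\|a-a'\|^2\le L_{a\ell}\|\ell-\ell'\|\,\|a-a'\|,
\]
and dividing by $\|a-a'\|$ when it is nonzero gives \cref{eq:br-lipschitz}.

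\emph{Main obstacle.} The only delicate point is the first-order characterization. Because $\mathcal A$ is constrained (for example the box $[0,1]^3$), the gradient need not vanish at the optimum, so the variational inequality is required rather than a stationarity condition. This also means the implicit function theorem cannot be used, since no second derivatives are assumed. The variational inequality itself follows directly: for $b\in\mathcal A$, the segment $a+t(b-a)$ stays in $\mathcal A$ by convexity, and the one-sided derivative at $t=0^+$ of $U_F$ along this segment must be nonpositive at a maximizer. Everything else is algebra.
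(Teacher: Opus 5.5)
Your proof is correct and takes essentially the same route as the paper: you add the two first-order variational inequalities, insert $\nabla_a U_F(x,\ell,a')$, and bound one piece by strong monotonicity of $-\nabla_a U_F$ and the other by Cauchy--Schwarz with the cross-Lipschitz hypothesis. You also spell out existence, the variational inequality, and strong monotonicity, which the paper leaves implicit.
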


\begin{proof}
Strong concavity gives uniqueness.  Let $a=a^\star(x,\ell)$ and
$a'=a^\star(x,\ell')$.  The first-order variational inequalities for the two
constrained maximization problems are
\begin{align*}
 \langle\nabla_aU_F(x,\ell,a),a'-a\rangle&\leq0,\\
 \langle\nabla_aU_F(x,\ell',a'),a-a'\rangle&\leq0.
\end{align*}
Adding them and using strong concavity at fixed $\ell$ gives
\begin{align*}
 0
 &\leq
 \langle\nabla_aU_F(x,\ell,a)
 -\nabla_aU_F(x,\ell',a'),a-a'\rangle\\
 &\leq
 -\mu\|a-a'\|^2\\
 &\quad+
 \langle\nabla_aU_F(x,\ell,a')
 -\nabla_aU_F(x,\ell',a'),a-a'\rangle.
\end{align*}
Consequently,
\[
 \mu\|a-a'\|^2
 \leq
 \|\nabla_a U_F(x,\ell,a')-\nabla_a U_F(x,\ell',a')\|
 \|a-a'\|.
\]
Applying the assumed cross-Lipschitz bound and cancelling $\|a-a'\|$
proves \cref{eq:br-lipschitz}; the zero-distance case is immediate.
\end{proof}

Thus, under these regularity conditions, smoothing the commitment signal also
bounds changes in the induced resource action.  The exact hinge term in
\cref{eq:follower-utility} is non-smooth at zero; direct application of the
differentiable proposition therefore requires a smooth hinge approximation
and a strongly concave response parameterization.  Neither property is
verified for the empirical follower, so the proposition motivates rather than
certifies the deployed smoothing rule.

\subsection{Guarantees for Action Repair}

\begin{proposition}[Feasibility and projection loss]
\label{prop:repair}
Let $\mathcal C\subseteq\mathbb R^d$ be nonempty, closed, and convex, and let
$R(a)=\Pi_{\mathcal C}(a)$ be Euclidean projection.  Then
$R(a)\in\mathcal C$ for every $a$.  If $U_L(x,\ell,\cdot)$ is
$L_U$-Lipschitz, then
\begin{equation}
    U_L(x,\ell,R(a))
    \geq U_L(x,\ell,a)-L_U\|R(a)-a\|.
    \label{eq:repair-loss}
\end{equation}
If $\sup_a\|R(a)-a\|\leq d_{\max}$ on the raw action domain, the loss is at
most $L_Ud_{\max}$.
\end{proposition}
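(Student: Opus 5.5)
The proof is short and splits into three parts, matching the three claims in \cref{prop:repair}.

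\emph{Feasibility.} We invoke the projection theorem for closed convex sets in Euclidean space. Fix $a\in\mathbb R^d$. Since $\mathcal C$ is nonempty, choose $c_0\in\mathcal C$ and restrict the minimization of $c\mapsto\|c-a\|^2$ to the set $\mathcal C\cap\{c:\|c-a\|\leq\|c_0-a\|\}$.
\begin{itemize}
\item This set is closed and bounded, hence compact, so a minimizer exists.
\item The function $c\mapsto\|c-a\|^2$ is strictly convex and $\mathcal C$ is convex, so the minimizer is unique.
\end{itemize}
Hence $R(a)=\Pi_{\mathcal C}(a)$ is well defined, and by construction it lies in $\mathcal C$. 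This gives the first claim. Convexity is used only for uniqueness; closedness and nonemptiness are used for existence.

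\emph{Loss bound.} We apply the Lipschitz assumption to the pair $R(a)$ and $a$:
\[
|U_L(x,\ell,R(a))-U_L(x,\ell,a)|\leq L_U\|R(a)-a\|.
\]
Keeping only the lower side gives \cref{eq:repair-loss}. One point needs care: the Lipschitz property must hold on a domain containing both $a$ and $R(a)$, meaning the raw action domain together with $\mathcal C$. I will state this explicitly, noting that when $\mathcal C\subseteq\mathcal A$ and the raw actions lie in a common ambient set, the assumption is read on that union.

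\emph{Uniform bound.} If $\|R(a)-a\|\leq d_{\max}$ for every raw action $a$, then substituting into \cref{eq:repair-loss} gives a loss of at most $L_U d_{\max}$, uniformly over raw actions.

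None of these steps is a real obstacle. The only subtle point is the well-definedness of the projection, which is exactly where the closedness and convexity hypotheses are used.
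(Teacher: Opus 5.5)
Your proposal is correct and follows the same route as the paper: feasibility comes from the projection onto a nonempty closed convex set being well defined, the lower bound comes directly from the Lipschitz inequality, and the uniform bound follows by substitution. You spell out the existence step (compact sublevel set) and the uniqueness step (strict convexity), which the paper asserts in one line, and your caveat that the Lipschitz property must hold on a domain containing both $a$ and $R(a)$ is a sensible clarification.
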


\begin{proof}
Existence, uniqueness, and feasibility of Euclidean projection follow from
closedness and convexity of $\mathcal C$.  The Lipschitz property gives
$|U_L(x,\ell,R(a))-U_L(x,\ell,a)|\leq L_U\|R(a)-a\|$, which yields the
stated lower bound.
\end{proof}

Taking $\mathcal C=\mathcal A_{\mathrm{real}}$ gives a formal feasibility
guarantee for the box projection used in deployment.  The loss bound applies
when the composed payoff is Lipschitz in $a$; on the compact action domain this
follows, for example, if $T$ and $Q$ are Lipschitz and $T_0$ is bounded away
from zero.  The result bounds only the distortion caused by projection; it
does not show that the empirically selected box is optimal or universally
safe.  It is a stage-wise payoff bound and does not by itself control
finite-horizon value loss, because a repaired action may also change future
states.

\subsection{Surrogate-to-Real Regret Transfer}

\begin{theorem}[Conditional transfer bound]
\label{thm:transfer}
Suppose that, over the restricted policy classes,
\begin{align}
 \sup_{\pi_L,\pi_F}
 |J_F^P(\pi_L,\pi_F)-J_F^{\widehat P}(\pi_L,\pi_F)|
 &\leq\delta_F, \label{eq:joint-value-error}\\
 \sup_{\pi_L}
 |V_L^P(\pi_L)-V_L^{\widehat P}(\pi_L)|
 &\leq\delta_L. \label{eq:reduced-value-error}
\end{align}
If $(\widehat\pi_L,\widehat\pi_F)$ is an
$(\epsilon_L,\epsilon_F)$ restricted approximate operating point under
$\widehat P$, then under $P$,
\begin{equation}
 \mathcal R_L^P(\widehat\pi_L)
 \leq\epsilon_L+2\delta_L,\qquad
 \mathcal R_F^P(\widehat\pi_L,\widehat\pi_F)
 \leq\epsilon_F+2\delta_F.
 \label{eq:transfer-bound}
\end{equation}
\end{theorem}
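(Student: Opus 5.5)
The argument is a standard ``sup of perturbed functions'' estimate, applied separately to the leader and follower regrets, using only the uniform value-error hypotheses \cref{eq:joint-value-error} and \cref{eq:reduced-value-error}. By assumption, the restricted classes are chosen so that the relevant maxima are attained, so every supremum below is a maximum. Either way, the argument needs only the elementary fact that if two functions $f,g$ on the same set satisfy $\sup|f-g|\le\delta$, then $|\sup f-\sup g|\le\delta$.

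For the leader, write
\[
\mathcal R_L^P(\widehat\pi_L)=\sup_{\pi'_L}V_L^P(\pi'_L)-V_L^P(\widehat\pi_L).
\]
The first step bounds the comparator term. For each $\pi'_L$, \cref{eq:reduced-value-error} gives $V_L^P(\pi'_L)\le V_L^{\widehat P}(\pi'_L)+\delta_L$, and taking the supremum yields
\[
\sup_{\pi'_L} V_L^P(\pi'_L)\le\sup_{\pi'_L}V_L^{\widehat P}(\pi'_L)+\delta_L.
\]
The second step bounds the candidate term from below: $V_L^P(\widehat\pi_L)\ge V_L^{\widehat P}(\widehat\pi_L)-\delta_L$. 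Subtracting, we get $\mathcal R_L^P(\widehat\pi_L)\le \mathcal R_L^{\widehat P}(\widehat\pi_L)+2\delta_L\le\epsilon_L+2\delta_L$, using the operating-point condition \cref{eq:approx-operating-point} under $\widehat P$.

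The follower bound is the same computation at fixed $\widehat\pi_L$, with $\pi_F\mapsto J_F^M(\widehat\pi_L,\pi_F)$ in place of $V_L^M$. Specializing \cref{eq:joint-value-error} to leader policy $\widehat\pi_L$ gives a uniform $\delta_F$ bound over $\pi_F\in\Pi_F$. That yields $+\delta_F$ on the supremum over $\pi'_F$ and $-\delta_F$ on $J_F^P(\widehat\pi_L,\widehat\pi_F)$, hence $\mathcal R_F^P\le\epsilon_F+2\delta_F$.

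The one point needing care, and the main obstacle if one tried a more primitive route, is the leader term. $V_L^M$ involves the optimistic selector $b_M(\pi_L)$, which is defined separately in each environment and can jump, because best-response sets change between $P$ and $\widehat P$. Deriving a closeness bound for $V_L$ from \cref{eq:joint-value-error} alone would therefore fail in general: one would need a margin or uniqueness condition on follower best responses. The theorem sidesteps this by assuming the reduced-value error bound \cref{eq:reduced-value-error} directly, so the plan is simply to use that hypothesis as stated and not attempt to derive it. I would add a remark that $\delta_L$ already absorbs any discontinuity of the selector.
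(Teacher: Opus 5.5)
Your proposal is correct and matches the paper's proof. Both regrets are bounded by adding $\delta$ to the comparator supremum and subtracting $\delta$ from the candidate's value, and the leader bound uses the reduced-value hypothesis directly. Your remark that the environment-dependent selector $b_M$ blocks deriving the leader bound from a $J$-level error alone is the same point the paper makes right after the theorem.
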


\begin{proof}
For the follower,
\begin{align*}
 \mathcal R_F^P(\widehat\pi_L,\widehat\pi_F)
 &=
 \sup_{\pi'_F}J_F^P(\widehat\pi_L,\pi'_F)
 -J_F^P(\widehat\pi_L,\widehat\pi_F)\\
 &\leq
 \sup_{\pi'_F}J_F^{\widehat P}(\widehat\pi_L,\pi'_F)
 -J_F^{\widehat P}(\widehat\pi_L,\widehat\pi_F)
 +2\delta_F\\
 &\leq\epsilon_F+2\delta_F.
\end{align*}
For the leader, \cref{eq:reduced-value-error} gives
\begin{align*}
 \mathcal R_L^P(\widehat\pi_L)
 &=
 \sup_{\pi'_L}V_L^P(\pi'_L)
 -V_L^P(\widehat\pi_L)\\
 &\leq
 \sup_{\pi'_L}V_L^{\widehat P}(\pi'_L)
 -V_L^{\widehat P}(\widehat\pi_L)
 +2\delta_L\\
 &\leq\epsilon_L+2\delta_L.
\end{align*}
\end{proof}

The theorem formalizes the role of calibration and repair: restricting
policies away from regions with severe simulator mismatch can reduce the
relevant uniform value errors.  The reduced-value assumption
\cref{eq:reduced-value-error} is separate from a uniform error bound on
$J_L$: because the follower best-response correspondence may change between
$P$ and $\widehat P$, one does not imply the other without additional response
stability or margin assumptions.  The present experiments do not estimate
$\epsilon_i$ or $\delta_i$, and low observed surrogate correlation prevents us
from assuming that these constants are small.

\section{Learning Setup}

\subsection{Expert-Style Demonstrations}

The follower response model is trained from expert-style demonstrations rather than from simulator rollouts alone.  The corpus combines public conversational data \citep{zhao2024wildchat} with privacy-filtered agent-task traces.  To avoid disclosing organization-specific datasets or collection procedures, we report the record schema and aggregate modeling role but omit source-specific counts, platform identifiers, and workflow metadata.  Each demonstration has the form
\begin{equation}
    (s_t, q_t, \alpha_t, T^{\mathrm{ratio}}_t, a_t),
\end{equation}
where $(q_t,\alpha_t)$ are the leader conditioning variables, $T^{\mathrm{ratio}}_t$ is the remaining-budget ratio, and $a_t=(c_t,p_t,u_t)$ is the resource action.  These labels are heuristic preferences, not human gold annotations and not observed best responses.  This distinction limits the game-theoretic interpretation of the learned follower.

\subsection{State and Action Schemas}

The scalar-state controller uses a 9-dimensional state with scalar task type and a 3-dimensional resource action.  The task-aware controller uses the same number of expert demonstrations but migrates the state to a 14-dimensional representation by replacing scalar task type with a six-way one-hot encoding over casual chat, simple question answering, text writing, code generation, data analysis, and complex reasoning.

\begin{table}[t]
\caption{Training setup for the two reported controller families.}
\label{tab:training-setup}
\vskip 0.05in
\begin{center}
\begin{small}
\begin{tabular}{@{}lcc@{}}
\toprule
Controller & State dimension & Action dimension \\
\midrule
Scalar-state & 9 & 3 \\
Task-aware & 14 & 3 \\
\bottomrule
\end{tabular}
\end{small}
\end{center}
\end{table}

The action dimensions are shared across versions: context strategy, prompt strategy, and tool budget.  The task-aware change is therefore not a larger imitation dataset, but a more explicit task representation and a different payoff-repair setting.

\subsection{Real-LLM Surrogate Data}

To improve payoff realism, we additionally collect a controlled real-LLM response grid for surrogate modeling.  This dataset is not used as direct GAIL imitation data.  Instead, it trains token and quality predictors that serve as an estimated payoff oracle.  The grid crosses coarse task types, two executor-model conditions, multiple context and prompt levels, bounded tool-budget levels, and repeated trials.  The surrogate features contain task indicators, model-condition indicators, and the three resource-action values.

\begin{table}[t]
\caption{Real-LLM surrogate dataset and held-out prediction quality.}
\label{tab:surrogate-data}
\vskip 0.05in
\begin{center}
\begin{small}
\begin{tabular}{@{}lcc@{}}
\toprule
Quantity & Value & Note \\
\midrule
Samples & 2400 & fixed train/test split \\
Token $R^2$ & 0.912 & MAE 65.5 tokens \\
Quality $R^2$ & 0.427 & MAE 0.050 \\
\bottomrule
\end{tabular}
\end{small}
\end{center}
\end{table}

The token surrogate is accurate enough to support cost-sensitive policy repair, while the quality surrogate remains only moderately predictive, consistent with known challenges in LLM-based evaluation \citep{zheng2023judging,liang2022holistic}.  This is why the final claims rely on real-API evaluation and passive shadow validation rather than simulator-only quality scores.

\subsection{Training Procedure}

Both reported controllers follow the same two-stage training pattern.  Phase 1 trains the conditional follower response with GAIL-style imitation.  Phase 2 freezes the follower and trains the leader with PPO-style updates over quality targets and subsidy coefficients.  Training rewards are optimization diagnostics only: because controller variants use different state encodings and payoff scaling, their raw rewards are not comparable and are omitted from the efficacy analysis.

\section{Payoff Misspecification and Real-API Repair}

\subsection{Wrong Payoff, Wrong Equilibrium}

A central lesson from our experiments is that payoff misspecification is not a minor evaluation issue, echoing sim-to-real concerns observed in other learned-control settings \citep{tobin2017domain}.  If simulated token costs, quality responses, or tool risks differ from real LLM API behavior, then a learned Stackelberg policy may converge to the wrong operating point.  A policy that is safe and efficient in simulation can become unsafe or dominated under real API dynamics.

\subsection{Real-API Calibration}

We therefore treat real-API calibration as part of the method.  Empirical probes revealed that token response is easier to calibrate than quality response.  Quality estimates remain noisy and judge-dependent, while token costs exhibit more consistent action elasticity.  This motivates a conservative repair strategy: constrain resource actions to empirically safe ranges rather than trusting raw policy outputs.

\subsection{Action-Space Projection}

The deployable repaired policy projects continuous actions into a real-safe feasible set.  Define
\begin{equation}
    \mathcal{A}_{\mathrm{real}}
    := [0.10,0.40]\times[0.35,0.65]\times[0.00,0.50].
\end{equation}
The deployed action is
\begin{equation}
    a_t^{\mathrm{real}} = \Pi_{\mathcal{A}_{\mathrm{real}}}(a_t),
    \qquad
    a_t^{\mathrm{real}}=(c_t^{\mathrm{real}},p_t^{\mathrm{real}},u_t^{\mathrm{real}}).
\end{equation}
This projection eliminates action regimes associated with prompt saturation, excessive context retention, or unrestricted tool budgets.  By \cref{prop:repair}, it guarantees membership in the selected box and admits a conditional utility-distortion bound.  It does not prove that the box itself is optimal or that all relevant real-world constraints have been encoded; learned constrained methods such as CPO provide a relevant alternative \citep{achiam2017cpo}.  We refer to the primary repaired policy as the scalar-state repaired controller.

\section{Coding-Aware Tool-Budget Repair}

Historical replay identified an additional risk: the repaired policy can be overly conservative for coding-like tasks, especially in tool budget.  This is undesirable because code editing, debugging, and repository inspection often require tools even when the global policy prefers low-cost actions.

We therefore add a coding-aware repair head for the shadow recommendation:
\begin{equation}
    u_t^{\mathrm{final}} = \min(0.50, \max(u_t^{\mathrm{base}}, \hat{u}_t^{\mathrm{need}})).
\end{equation}
The target is not historical tool imitation.  Instead, it estimates the minimum necessary tool budget for coding-like tasks while retaining a hard cap at 0.50.  In a historical replay study, the repair sharply reduces required-coding low-tool misses, keeps the tool trap rate at 0.0\%, and trades some estimated token savings for task safety.

\section{Passive Shadow Evaluation Protocol}

\subsection{Runtime Non-Interference}

Before active deployment, we evaluate the controller with a passive shadow protocol.  The purpose is to observe recommendation distributions under real workflows while preserving the behavior policy, a setting related to logged-policy evaluation \citep{bottou2013counterfactual,swaminathan2015counterfactual,agarwal2017effective}.  Let $\mu$ denote the actual execution policy and let $g_{\widehat{\pi}}$ denote the complete shadow recommendation map, including the learned leader, learned response model, projection, and any repair layer.  In shadow mode,
\begin{equation}
    a_t^{\mathrm{exec}} \sim \mu(\cdot\mid h_t),
    \qquad
    a_t^{\mathrm{shadow}} = g_{\widehat{\pi}}(x_t).
\end{equation}
The runtime contract excludes $a_t^{\mathrm{shadow}}$ from the inputs used to generate the user-visible action.  A causal statement of the intended contract is
\begin{equation}
\begin{aligned}
    &P\!\left(a_t^{\mathrm{exec}}\mid h_t,x_t,
      \operatorname{do}(a_t^{\mathrm{shadow}}=a)\right) \\
    &\qquad =
    P\!\left(a_t^{\mathrm{exec}}\mid h_t,x_t\right),
    \quad \forall a,
\end{aligned}
    \label{eq:shadow-noninterference}
\end{equation}
following the intervention notation of \citet{pearl2009causality}.  This is a software-architecture guarantee, not a statistical independence claim: $a_t^{\mathrm{exec}}$ and $a_t^{\mathrm{shadow}}$ may be correlated because both depend on the task and history.  The recommendation is logged but does not modify responses, context selection, tool calls, tool ordering, or task execution.

\subsection{Logged Counterfactual Record}

Each shadow event records a privacy-minimized counterfactual tuple
\begin{equation}
\begin{aligned}
    z_t = (&\tau_t,\, \hat{g}_t,\, a_t^{\mathrm{shadow}},\, f_t, \\
           &\hat{Q}_t,\, m_t).
\end{aligned}
\end{equation}
where $\tau_t$ is a coarse time or turn index, $\hat{g}_t$ is a coarse task category, $a_t^{\mathrm{shadow}}=(c_t,p_t,u_t)$ is the recommended resource action, $f_t$ is a fallback indicator, $\hat{Q}_t$ is an optional quality proxy, and $m_t$ contains privacy-filtered metadata.  This record is sufficient for estimating recommendation distributions, trap rates, fallback rates, and task coverage, while avoiding dependence on raw user prompts or assistant responses.

\subsection{Privacy-Preserving Aggregation}

For group analysis, we export only aggregate summaries.  These summaries intentionally exclude raw prompts, assistant messages, free-form notes, direct identifiers, and per-turn records.  Per-turn raw records are not used for cross-participant reporting.  Consequently, passive shadow evaluation can validate runtime stability and bounded action distributions, but it cannot by itself estimate realized token savings unless actual token usage is explicitly logged.

\section{Experimental Evaluation}

\subsection{Research Questions and Evaluation Design}

The empirical study has one primary question: \textbf{does the selected repaired controller reduce real-API token cost relative to a conservative baseline without detectable quality degradation?}  The 300-turn head-to-head evaluation is the sole confirmatory efficacy experiment.  All remaining analyses serve narrower roles: calibration probes diagnose payoff misspecification, offline replay identifies unsafe action patterns and distribution shift, and passive shadow logging evaluates runtime non-interference and observability.  These supporting analyses are not treated as independent demonstrations of online cost savings.

\subsection{Main Experiment: Real-API Cost-Quality Evaluation}

The main experiment uses 20 episodes, three turns per episode, and five strategies, for 300 evaluated turns.  It uses an API-served executor LLM and a separate lightweight LLM judge.  Routing and platform implementation details are omitted because they are not necessary to interpret the algorithmic comparison.  The strongest deployable result is the scalar-state repaired controller: compared with the conservative baseline, it reduces tokens per turn from 703.8 to 581.1, a 17.4\% reduction.  The token reduction is statistically significant with Welch $p=0.022$, while quality changes from 0.899 to 0.894 with no statistically significant difference ($p=0.44$).  This latter result is an absence of detected difference, not a formal non-inferiority conclusion, because a non-inferiority margin and power analysis were not pre-specified.  Quality-per-token efficiency improves from 1.278 to 1.694 when scaled by $10^3$, a 32.6\% gain.

\begin{table}[t]
\caption{Main real-API experiment over 300 evaluated turns.  The reported Stackelberg variants cover different Pareto roles.}
\label{tab:main-real-api}
\vskip 0.05in
\begin{center}
\begin{small}
\begin{tabular}{@{}lcccc@{}}
\toprule
Policy & Tok. & Q & Eff. & Role \\
\midrule
Middle & 933.7 & 0.888 & 0.951 & dominated \\
Conserv. & 703.8 & 0.899 & 1.278 & baseline \\
Task-aware raw & 263.5 & 0.824 & 3.127 & low-cost \\
Scalar+R & 581.1 & 0.894 & 1.694 & sweet spot \\
Task-aware+R & 817.2 & 0.904 & 1.106 & quality \\
\bottomrule
\end{tabular}
\end{small}
\end{center}
\end{table}

The Pareto frontier is informative rather than reducible to one winner.  The task-aware raw controller is an ultra-low-cost corner with lower quality, the scalar-state repaired controller is the efficiency sweet spot, and the task-aware repaired controller is the highest-quality operating point.  Three of the four non-dominated points are Stackelberg variants.

\begin{table}[t]
\caption{Task-level real-API comparison between the scalar-state repaired controller and the conservative baseline.  Each entry is tokens / quality.}
\label{tab:task-real-api}
\vskip 0.05in
\begin{center}
\begin{small}
\begin{tabular}{@{}lcc@{}}
\toprule
Task & Scalar+R & Conservative \\
\midrule
Casual chat & 572 / 0.89 & 697 / 0.90 \\
Code gen. & 830 / 0.88 & 975 / 0.88 \\
Complex reas. & 400 / 0.88 & 724 / 0.90 \\
Simple QA & 535 / 0.92 & 666 / 0.90 \\
Text writing & 604 / 0.89 & 627 / 0.90 \\
\bottomrule
\end{tabular}
\end{small}
\end{center}
\end{table}

\subsection{Supporting Analysis I: Why Repair Is Necessary}

The simulator and action probes are used to explain why repair is required; they are not treated as efficacy evidence.  In simulation, the learned policy can occupy a high-quality Pareto region while consuming more tokens than conservative fixed baselines.  This is consistent with the formulation---the controller seeks a cost-quality operating point, not unconditional token minimization---but it also shows that simulator-only success is sensitive to payoff scaling and baseline choice.

A 630-record sim-to-real calibration study confirms that the simulator is an imperfect payoff oracle.  Real token cost averages 347 tokens versus 827 simulated tokens, with Pearson correlation 0.355.  Real quality averages 0.887 versus 0.856 simulated quality, but the quality correlation is only 0.028.  A global token correction fit gives $T_{\mathrm{corr}}=0.3167T_{\mathrm{sim}}+85.4$, but the low quality correlation makes simulator-only equilibrium claims unreliable.

Real-API action probes further explain why payoff repair is necessary.  Context strategy has little single-turn elasticity, prompt verbosity can increase token cost by more than $20\times$, and high tool-budget hints can trigger phantom-tool failures that reduce both tokens and quality.  In a multi-turn context-growth probe, cumulative tokens by turn five increase from 3199 at low context to 9586 at full context, with nearly linear growth for full context ($R^2=0.997$).

\begin{table}[t]
\caption{Supporting calibration diagnostics.  These results motivate repair and are not confirmatory efficacy tests.}
\label{tab:action-probes}
\vskip 0.05in
\begin{center}
\begin{small}
\begin{tabularx}{\columnwidth}{@{}lXX@{}}
\toprule
Diagnostic & Observation & Interpretation \\
\midrule
Token calibration & real 347; sim 827; $r=.355$ & moderate mismatch \\
Quality calibration & real .887; sim .856; $r=.028$ & surrogate unreliable \\
Context probe & $+3.9\%$ tokens; $\Delta Q=.025$ & weak one-turn effect \\
Prompt probe & $+2162\%$ tokens; $\Delta Q=.135$ & strong nonlinearity \\
Tool probe & $-68\%$ tokens; $\Delta Q=-.250$ & failure mode observed \\
\bottomrule
\end{tabularx}
\end{small}
\end{center}
\end{table}

Empirically, raw policies often enter unsafe action regimes.  An early raw controller has a 99.3\% any-trap rate, while the scalar-state raw controller still has a 79.7\% any-trap rate.  After elasticity-based projection, the same policy family has zero observed trap-rate violations in the offline action audit.

\begin{table}[t]
\caption{Action repair audit.  Trap means any of $c>0.9$, $p>0.7$, or $u>0.5$.}
\label{tab:repair-audit}
\vskip 0.05in
\begin{center}
\begin{small}
\begin{tabular}{@{}lcc@{}}
\toprule
Mode & Mean action & Trap \\
\midrule
Early raw & $[0.83,0.82,0.84]$ & 99.3\% \\
Scalar raw & $[0.19,0.80,0.18]$ & 79.7\% \\
Early + proj. & $[0.35,0.59,0.42]$ & 0.0\% \\
Scalar + proj. & $[0.16,0.60,0.09]$ & 0.0\% \\
\bottomrule
\end{tabular}
\end{small}
\end{center}
\end{table}

\subsection{Supporting Analysis II: Offline Risk Discovery}

We replayed a privacy-filtered set of historical agent interactions.  This is an offline diagnostic, not an active A/B test: the policy recommends actions, while token usage is estimated through an elasticity model.  The replay's main contribution is risk discovery rather than a savings claim.  It reveals a distribution-shift failure mode in which coding-like tasks receive insufficient tool budget despite satisfying global action thresholds.

The replay motivates a coding-aware repair head that estimates minimum necessary tool budget rather than imitating historical full-tool behavior.  On the same replay set, the repair reduces the required-coding low-tool miss rate from 64.1\% to 1.0\%, preserves a zero observed tool-threshold violation rate, and reduces estimated savings from 47.2\% to 34.9\%.  These numbers quantify an offline safety-cost trade-off; they do not establish online task success or realized savings.

\subsection{Supporting Analysis III: Passive Shadow Runtime Check}

Passive shadow aggregates show bounded recommendations and no observed violations of the monitored action thresholds.  They also reveal environment-dependent fallback behavior.  The strongest supported conclusion is therefore limited: the recommendation and logging path can run without intentionally affecting execution, and the observed recommendations remain within configured bounds.  Because the shadow policy is not executed and actual token labels are unavailable, these logs cannot estimate realized savings under standard logged-feedback methods \citep{swaminathan2015counterfactual,agarwal2017effective}.

\section{Discussion}

The primary experiment supports a narrower claim than the full theoretical framing: under the evaluated tasks, models, and API configuration, one repaired controller reduces mean token use relative to a conservative baseline without a statistically detected quality difference.  The supporting analyses explain why real-API repair is needed and identify failure modes that are hidden by aggregate action thresholds.  They do not independently establish online effectiveness.

At the same time, the evidence remains preliminary.  The real-API evaluation is finite, historical replay relies on estimated token costs, and passive shadow aggregates lack actual token usage.  Moreover, coding-like online coverage is still limited, so the coding-aware repair is supported by offline replay rather than active A/B results.

\section{Runtime Integration}

The formulation applies to agent runtimes that separate policy control from task execution.  Before each model step, a policy layer can construct the state vector from coarse session metadata, call the resource controller, and map the returned action to context compaction, prompt style, and a bounded tool budget.  A conservative deployment path consists of passive shadowing, a small canary, a controlled A/B test, and only then a broader rollout with fallback.

The minimum observability fields are actual token usage, latency, fallback category, base and repaired action summaries, tool count, coarse task type, and a task-appropriate outcome measure.  Without these fields, shadow logs can validate bounded recommendations and basic runtime behavior but not realized cost savings or task success.

\section{Limitations and Unfinished Work}

This study is a research prototype with preliminary real-API evidence.  The following limitations define the boundary of the current claims.

\paragraph{Conditional theory without a numerical certificate.}
\Cref{thm:existence,prop:response-stability,prop:repair,thm:transfer} establish properties of the restricted model under compactness, continuity, strong-concavity, Lipschitz, and bounded-transfer-error assumptions.  The follower used in experiments is a learned conditional response model, not an exact best-response oracle, and projection plus task-specific repair changes the deployed policy after training.  We do not compute the restricted regrets $\epsilon_i$ or transfer errors $\delta_i$, nor verify strong concavity of the learned response surface.  The current result is therefore a repaired candidate operating point with conditional guarantees, not a numerically certified real-system equilibrium.

\paragraph{Limited statistical design.}
The primary experiment is finite and uses an LLM judge.  The reported $p=0.44$ does not prove quality equivalence or non-inferiority; no non-inferiority margin or prospective power analysis was pre-specified.  Confidence intervals, repeated evaluation across independent judges, and a pre-registered confirmatory study remain to be completed.

\paragraph{Quality measurement is task-dependent.}
The current quality channel relies on LLM judges and weak proxies, which are useful for early comparison but insufficient for complex tasks \citep{zheng2023judging,liang2022holistic}.  Coding tasks require test pass rates and patch correctness; data-analysis tasks require verifiable numerical outputs; writing and research tasks may require rubric-based or human evaluation.  Integrating these outcome measures is unfinished.

\paragraph{No active online validation.}
Passive shadowing checks the logging path and recommendation bounds, but it does not execute the policy.  Actual online token savings, latency effects, tool-use changes, and task-success impact remain unknown.  Logged-data estimators can support future sequential off-policy analysis \citep{jiang2016doubly}, but the present logs lack the action propensities and outcome coverage required for such estimates.  A canary and controlled A/B test with explicit stopping rules are required before deployment claims are warranted.

\paragraph{External validity and repair portability.}
The action set $\mathcal{A}_{\mathrm{real}}$ is calibrated from a limited set of models, prompts, tasks, and tool interfaces.  Its bounds are not universal.  Model updates or runtime changes may invalidate the estimated elasticities and require recalibration.  Evaluation across independent runtimes and model families remains unfinished.

\paragraph{Private-data reproducibility and runtime diagnostics.}
Some learning and replay data cannot be released at record level.  The paper therefore omits organization-specific platform identifiers, collection procedures, and workflow metadata.  This protects confidentiality but limits independent reproducibility.  In addition, environment-dependent fallback events have not been fully attributed; health checks, versioned adapters, and structured failure-reason logging are unfinished.

\section{Conclusion}

Resource control in multi-turn LLM agents can be usefully framed as a sequential leader-follower problem rather than static rule tuning.  For the restricted formulation, we establish equilibrium existence and conditional bounds for response stability, projection loss, and surrogate-to-real regret transfer.  The framework learns responses over context, prompt verbosity, and tool budget, then repairs the resulting controller using real-API evidence.  In the primary experiment, the selected repaired controller reduces mean token use relative to a conservative baseline, while the measured quality difference is not statistically significant.  Because the regret and transfer constants are not estimated, exact equilibrium, online effectiveness, portability, and production reliability remain open.

\section*{Impact Statement}

This work aims to improve the cost, reliability, and controllability of LLM agent systems.  Lowering unnecessary resource consumption may reduce operational cost and energy usage, but resource-control policies can also degrade user experience if quality signals are misspecified or if tools are under-allocated for complex tasks.  For this reason, we advocate strict passive shadow validation, conservative action-space repair, and explicit safety gates before active deployment.

\bibliography{stackelberg_resource_aware_agents_report_revised}
\bibliographystyle{plainnat}

% \appendix
% \onecolumn

% \section{Action Interpretation}

% The three action dimensions are interpreted as resource-control intensities.  Low context values correspond to aggressive compression, intermediate values to smart summaries, and high values to broad context retention.  Prompt values correspond to concise, standard, or detailed prompting.  Tool values correspond to no tools, restricted core tools, or broader tool budget.  The repaired deployment caps tool budget at 0.50.

% \section{Shadow Log Schema}

% A shadow record contains coarse timing information, coarse task type, a resource-action vector, fallback category, optional aggregate token statistics, and a quality proxy.  Privacy-preserving exports aggregate these records and omit raw prompts, assistant messages, free-form notes, direct identifiers, and per-turn records.

% \section{Coding-Aware Repair Details}

% The coding-aware repair head is trained from privacy-filtered task traces with labels designed to estimate minimum necessary tool budget.  It is intentionally not trained to imitate historical tool usage, because historical behavior may overuse tools.  In passive mode, the repair only modifies the logged counterfactual recommendation.

\end{document}